\pdfoutput=1
\documentclass[sigconf,nonacm]{acmart}

\renewcommand\footnotetextcopyrightpermission[1]{}

\usepackage{colortbl}

\newcommand{\ours}{\textsc{Prism-1}}

\begin{document}

\title{Prism-1: Position- and Rival-Independent Scoring for System One Models}

\author{Yao Long Teng}
\authornote{Both authors contributed equally to this research.}
\affiliation{%
  \institution{Nanyang Technological University}
  \country{Singapore}}

\author{Jianguo Huang}
\authornotemark[1]
\affiliation{%
  \institution{Nanyang Technological University}
  \country{Singapore}}

\author{Jiayi Cai}
\affiliation{%
  \institution{Nanyang Technological University}
  \country{Singapore}}

\author{Yanchen Deng}
\authornote{Corresponding author.}
\affiliation{%
  \institution{Nanyang Technological University}
  \country{Singapore}}

\author{Bo An}
\affiliation{%
  \institution{Nanyang Technological University}
  \country{Singapore}}

\renewcommand{\shortauthors}{Teng, Huang, et al.}

\begin{abstract}
Given a state and a set of options, such as intents, tools or interface elements, System One decision models return a distribution over the options in one forward pass, without generated reasoning. Web systems increasingly build them by reading decisions from language models. Two axioms of rational choice are natural requirements for such decisions, because in web systems the listing and the option set are determined by other components and can change or be manipulated: the order in which the options are listed should not matter, and adding an option should not change the relative odds of the others (Luce's choice axiom). We test the standard reads, namely single-choice question (SCQ) prompting, with shuffled-order training, order averaging or sorted options, and invariant set architectures, and find that each violates at least one axiom: one shuffle changes 12--20\% of SCQ answers, sorting only freezes an arbitrary convention, and a well-chosen added option steers 4--34\% of decisions. By Luce's theorem, a read satisfies both axioms only if it is a softmax over scores that each depend on one option alone, so in a transformer they hold by construction only if no token that scores an option sees another. We implement this as \ours{} (\textbf{P}osition- and \textbf{R}ival-\textbf{I}ndependent \textbf{S}coring for System \textbf{O}ne \textbf{M}odels): every option is read in its own block at shared positions, with the model's usual left-to-right attention, in one forward pass. On five benchmark sets with Qwen3-8B and Gemma 4 12B, SCQ prompting trained on the same data still changes 7\% of its answers under one shuffle, and an added irrelevant option steers 12\% of its decisions. \ours{} stays within one point of that baseline's accuracy and is 2--4 points more accurate once only answers that survive a reordering are counted, while no shuffle changes any of its answers and no added irrelevant option moves its choice among the others.
\end{abstract}

\begin{CCSXML}
<ccs2012>
<concept>
<concept_id>10010147.10010178.10010179</concept_id>
<concept_desc>Computing methodologies~Natural language processing</concept_desc>
<concept_significance>500</concept_significance>
</concept>
<concept>
<concept_id>10010147.10010257.10010293.10010294</concept_id>
<concept_desc>Computing methodologies~Neural networks</concept_desc>
<concept_significance>300</concept_significance>
</concept>
</ccs2012>
\end{CCSXML}
\ccsdesc[500]{Computing methodologies~Natural language processing}
\ccsdesc[300]{Computing methodologies~Neural networks}

\keywords{System One models, rational choice, independence of irrelevant alternatives, order invariance, decision models, large language models}

\maketitle

\section{Introduction}

System One models answer typed questions in a single forward pass~\cite{typesafe2026docs,almeida2026jev}. Named after the fast, intuitive mode of human thinking~\cite{kahneman2011thinking}, they return a probability distribution over the options instead of generated text or reasoning, so a caller gets an answer and a confidence it can threshold. Web systems use them wherever a decision must be fast: routing a customer message among intents, picking a tool for an agent, or choosing the next element on a page~\cite{deng2023mind2web,zhou2024webarena}. The standard way to obtain such a decision from a language model lists the options under letters and reads the probabilities of the letters~\cite{robinson2023leveraging}.

We require two properties from choice theory of a decision over a set. The first is order invariance: listing the same options differently should not change the decision, a condition that normative theories of choice take as essential~\cite{tversky1986rational}. The second is Luce's choice axiom~\cite{luce1959individual}, a form of independence of irrelevant alternatives: adding an option should not change the relative odds of the others. Both matter in web systems because the listing and the option set are not part of the question: other components produce them, and they change or can be manipulated. The order of the options comes from a tool registry, a retriever's ranking, a database query or the layout of a page, often arbitrarily. Option sets grow as tools are registered and intents are introduced. Under the choice axiom, a new option can win, but it cannot change which of the existing options wins, so it cannot silently alter decisions it was not meant to affect. The same holds against an adversary: tools can be injected into an agent's library~\cite{shi2026toolhijacker}, and under the choice axiom an injected option can win only by being chosen itself, never by redirecting the choice among the others. In our tests, an option that names another one as correct redirects 3--55\% of the decisions of the other reads we compare, and none of \ours{}'s. Consistency also underpins caching, incident replay, A/B tests and audits, which assume that the same request receives the same decision. And when the options are candidates, such as documents, products or responses, a candidate's position in the list should not decide its outcome, a bias that web search corrects for when learning from clicks~\cite{joachims2005accurately,craswell2008experimental} and that affects language models used as judges~\cite{wang2024fair} and as recommenders~\cite{hou2024large}.

The standard decision reads violate both. With Qwen3-8B and Gemma 4 12B reading letters, one shuffle of the options changes 20.0\% and 11.9\% of the answers on the five sets of our benchmark, and a published decision fine-tune of Gemma 4 12B~\cite{torchcast2026} still changes 9.1\%. When the decision is a web agent's next page element, one shuffle changes 15\% and 7\% of trained SCQ's choices, and an added element steers 32\% and 25\% of them. Commercial decision APIs are no exception: one shuffle changes 4.7\% of the answers of Jev 1.13~\cite{almeida2026jev} and 7.6\% of those of Mercury Decide~\cite{inception2026mercurydecide}, against 1.3\% and 0.7\% when the identical question is asked again, and an added option steers 10.9\% and 16.0\% of their decisions, against 5.1\% and 1.5\% for identical copies of the question, as Table~\ref{tab:main} shows. The usual remedies do not restore order invariance: training on shuffled orders~\cite{guda2025selection} plateaus at about 6\% of answers changed, averaging over orders~\cite{zhang2026anyjev,tang2024found} multiplies the cost and stays approximate, and sorting the options only freezes an arbitrary convention, since sorting Z--A instead of A--Z changes 7--8\% of answers. None of them, and none of the invariant set architectures~\cite{mcilroyyoung2024order,egressy2025setllm}, satisfies the choice axiom: adding one irrelevant option moves 2--10 points of probability among the options that were already there, and an adversary who can add one option steers 4--34\% of their decisions.

The axioms leave a System One read little freedom: it cannot reason its way to consistency or average over many orders, so consistency has to hold within one forward pass. By Luce's theorem, a rule satisfies the choice axiom only if it is a softmax over scores that each depend on one option alone, and with order invariance the scores cannot depend on the listing either. This form is familiar on the web: listwise learning to rank uses the Plackett--Luce model~\cite{plackett1975analysis,cao2007learning,xia2008listwise}, and pairwise preference models, from Bradley--Terry~\cite{bradley1952rank} to the reward models that align language models~\cite{ouyang2022training}, are its two-option case. In a transformer, the axioms hold by construction when no token that contributes to an option's score attends to another option and an option's positions do not depend on where it is listed. Otherwise, they hold only if the model happens to ignore what it sees. Each of the reads we test breaks this condition: the letter that SCQ reads attends to the whole list, and the answer tokens of set architectures attend to every option.

We implement the form the axioms require as \ours{} (\textbf{P}osition- and \textbf{R}ival-\textbf{I}ndependent \textbf{S}coring for System One \textbf{M}odels). It gives every option a block that starts at the same position after a shared context, sees the context and itself and nothing else, and is read by the model's own causal, left-to-right attention. The yes/no question is asked once in the context, and the model answers it at the last token of each block, and a softmax across the blocks gives the decision. Both axioms hold by construction, any number of options can be read in one pass, and the shared context is computed once. The open question is whether a language model can compute accurate scores this way, and we find that it can if each option is read causally.

On five benchmark sets with two models, \ours{} is statistically tied with SCQ prompting trained on the same data (0.9 points below it on Gemma and level with it on Qwen), while it changes no answer under shuffling and moves no probability between the original options when one is added, where SCQ changes 7\% of its answers and moves about 4 points of probability. Counting only answers that survive a reordering of the options, it is 1.6 and 3.7 points more accurate. It stays ahead of SCQ in order-robust accuracy as the number of options grows, and it processes fewer tokens than any other read we measured. Our contributions are as follows.
\begin{itemize}
  \item Section~\ref{sec:prelim} characterises the decision reads that satisfy order invariance and Luce's choice axiom, and what this requires of transformer architectures.
  \item Section~\ref{sec:analysis} shows that each of the standard reads violates at least one axiom, with three tests: shuffles, an added irrelevant option, and a change of sort key.
  \item Section~\ref{sec:method} introduces \ours{}, a read of the required form for language models and the cheapest read we measured.
  \item Section~\ref{sec:experiments} evaluates it on five sets and two models, studies how to compute each option's score, and measures what the axioms cost.
\end{itemize}

\section{Axioms for Decision Reads}
\label{sec:prelim}

\begin{figure*}[t]
  \centering
  \includegraphics[width=\textwidth]{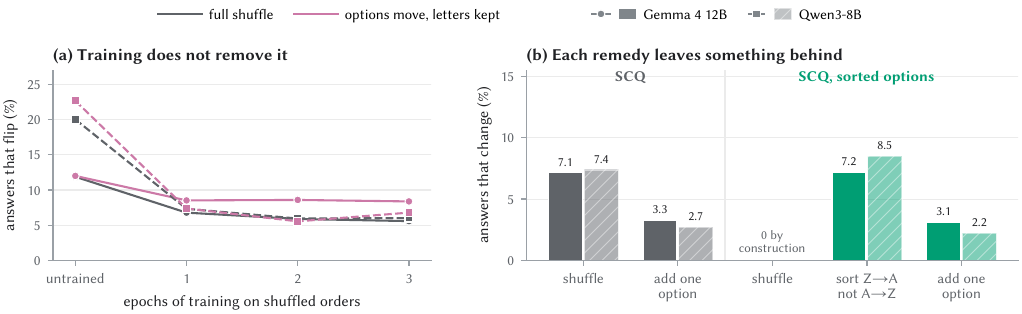}
  \caption{SCQ violates both axioms, after training and after sorting (mean over five sets). (a) Share of answers that change under one shuffle after each epoch of training on shuffled orders, and when the options move but keep their own letters. (b) After one epoch of training, the answers that change when the options are shuffled, zero by construction with sorted options because sorting undoes the shuffle, the answers that move to a different original option when one irrelevant option is added, and, for options sorted by text, the answers that change when they are sorted Z--A instead of A--Z.}
  \label{fig:motivation}
\end{figure*}

\paragraph{Decisions.} A decision model answers a choice question $q$ about a state $x$ with a set $S=\{o_1,\dots,o_K\}$ of options, exactly one of which is correct. The question presents its options in some order $\sigma$, a listing of $S$. A decision rule $R$ maps $(x,q,\sigma)$ to a distribution over $S$, and its answer is the most probable option.

\paragraph{SCQ prompting.} The standard rule lists the options of $\sigma$ as a question under the letters A, B, \dots, and renormalises the next-token probabilities of these letters, a read often called multiple-choice prompting~\cite{robinson2023leveraging}. Since exactly one option is correct, we call it \emph{SCQ}, for single-choice question. The letter and the position of each option both depend on $\sigma$.

\begin{definition}[Order invariance]
A decision rule $R$ is order-invariant if $R(x,q,\sigma)=R(x,q,\sigma')$ for all $x$, $q$ and listings $\sigma,\sigma'$ of the same set. We then write $R(o\mid S)$ for the probability of option $o$ in set $S$, leaving the context implicit.
\end{definition}

\begin{definition}[Choice axiom~\cite{luce1959individual}]
An order-invariant rule satisfies the choice axiom if $R(a\mid T)=R(a\mid S)\,R(S\mid T)$ for all sets $S\subseteq T$ and $a\in S$, where $R(S\mid T)=\sum_{b\in S}R(b\mid T)$. Equivalently, the ratio $R(a\mid T)/R(b\mid T)$ is the same in every set that contains $a$ and $b$.
\end{definition}

The choice axiom is the probabilistic form of independence of irrelevant alternatives. In particular, adding an option can change the answer only to the added option, and it takes probability from the other options in proportion. This is a requirement, not a law: for an added option that nearly duplicates an existing one, a share taken mostly from its twin may be more reasonable~\cite{debreu1960review}, a case we return to in Section~\ref{sec:limitations}.

\begin{proposition}[Luce~\cite{luce1959individual}]
\label{prop:luce}
A rule that gives every option a positive probability satisfies the choice axiom if and only if there is a positive scale $v$ with $R(a\mid S)=v(a)/\sum_{b\in S}v(b)$.
\end{proposition}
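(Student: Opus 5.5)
We prove the two directions separately. Throughout, we work on a finite universe $U$ of options containing every set under consideration; if the sets range over an infinite universe, we fix a reference option and apply the same argument to every finite subset containing it.

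\emph{Sufficiency.} Suppose $R(a\mid S)=v(a)/\sum_{b\in S}v(b)$. Then every probability is positive because $v>0$. For $S\subseteq T$ we compute
\[
R(S\mid T)=\sum_{b\in S}v(b)\Big/\sum_{c\in T}v(c),
\]
so
\[
R(a\mid S)\,R(S\mid T)=\frac{v(a)}{\sum_{b\in S}v(b)}\cdot\frac{\sum_{b\in S}v(b)}{\sum_{c\in T}v(c)}=R(a\mid T).
\]
This is the choice axiom.

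\emph{Necessity.} The plan is to read the scale off the largest set and push it down to every subset. Assume all probabilities are positive and the choice axiom holds.

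\begin{enumerate}
\item Define $v(a)=R(a\mid U)$ for each $a\in U$. This is positive by hypothesis.
\item Take any $S\subseteq U$ and $a\in S$. The axiom with $T=U$ gives $R(a\mid U)=R(a\mid S)\,R(S\mid U)$.
\item By definition, $R(S\mid U)=\sum_{b\in S}R(b\mid U)=\sum_{b\in S}v(b)$, and this sum is positive. Dividing yields
\[
R(a\mid S)=\frac{v(a)}{\sum_{b\in S}v(b)}.
\]
\end{enumerate}
Since $S$ and $a$ were arbitrary, this is the claimed form.

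\emph{Where positivity matters.} The only delicate point is that positivity is used to make $R(S\mid U)$ nonzero, so that the division is legitimate. This is exactly why the hypothesis is needed; without it, Luce's theorem fails in general.

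\emph{Infinite universe.} If no largest finite set is available, we fix an option $o$ and set $v(a)=R(a\mid\{a,o\})/R(o\mid\{a,o\})$. The ratio form of the axiom then shows $R(a\mid S)/R(b\mid S)=v(a)/v(b)$: both ratios equal the corresponding ratio in $S\cup\{o\}$, which is obtained by applying the axiom to $\{a,o\}$, $\{b,o\}$ and $S\cup\{o\}$. Normalising these ratios over $S$ gives the same formula.

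The main bookkeeping obstacle is this consistency check, that ratios computed through the reference option agree with ratios computed within $S$. It follows directly from the equivalent ratio statement in the definition of the choice axiom.
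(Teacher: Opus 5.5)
Your proof is correct, but your main necessity argument takes a different route from the paper's. You read the scale off the whole universe, $v(a)=R(a\mid U)$, so a single application of the product form with $T=U$ gives $R(a\mid S)=v(a)/\sum_{b\in S}v(b)$ directly. This needs neither the ratio form nor a transitivity step.

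The paper instead fixes a reference option $r$ and sets $v(a)=R(a\mid\{a,r\})/R(r\mid\{a,r\})$. It uses the triple $\{a,b,r\}$ to show that every pairwise ratio equals $v(a)/v(b)$, and then invokes ratio invariance to get proportionality to $v$ inside each $S$.

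Your finite argument is shorter, but it needs $U$ itself to be an admissible finite choice set. The paper draws options from a universe $\mathcal{O}$ that is not assumed finite (and for option texts is not), with sets being its finite subsets. In that setting your ``infinite universe'' paragraph is the operative proof, and it is essentially the paper's, with $S\cup\{o\}$ in place of $\{a,b,r\}$. The reference-option scale is also what the paper reuses in Proposition~\ref{prop:form}. There it matters that $v_{x,q}(a)$ is built from two-option sets and so visibly depends on $(x,q,a)$ alone.

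Two minor remarks. First, your opening suggestion to apply the finite argument to every finite $U'\ni o$ would give scales $R(\cdot\mid U')$ that disagree across $U'$. They glue only after dividing by $R(o\mid U')$, which your later paragraph effectively does. Second, for sufficiency you verify the product form directly, whereas the paper checks ratio invariance and appeals to its converse; both are equally valid.
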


\begin{proposition}
\label{prop:form}
A decision rule that gives every option a positive probability is order-invariant and satisfies the choice axiom in every context if and only if
\begin{equation}
  \label{eq:form}
  R(o\mid x,q,\sigma)=\frac{\exp s(x,q,o)}{\sum_{o'\in S}\exp s(x,q,o')}
\end{equation}
for a score $s$ that depends on the state, the question and the option alone.
\end{proposition}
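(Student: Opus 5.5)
We prove the two directions separately, using Proposition~\ref{prop:luce} as the main tool. The ``if'' direction is a direct check. Under \eqref{eq:form}, the probability of $o$ depends on $\sigma$ only through the set $S$, since the score $s(x,q,o)$ does not involve the listing and the denominator sums over $S$ regardless of order. The rule is therefore order-invariant, and every probability is positive because exponentials are. For the choice axiom, take $S\subseteq T$ and $a\in S$. Then $R(S\mid T)=\sum_{b\in S}e^{s(b)}/\sum_{c\in T}e^{s(c)}$, and multiplying by $R(a\mid S)=e^{s(a)}/\sum_{b\in S}e^{s(b)}$ gives $e^{s(a)}/\sum_{c\in T}e^{s(c)}=R(a\mid T)$. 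This holds for each fixed $(x,q)$, so it holds in every context.

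For the ``only if'' direction, fix a context $(x,q)$. Order invariance lets us write $R(o\mid S)$ without reference to $\sigma$, so the choice axiom (as defined) applies. Since all probabilities are positive, Proposition~\ref{prop:luce} gives a positive scale $v_{x,q}$ on the options with $R(a\mid S)=v_{x,q}(a)/\sum_{b\in S}v_{x,q}(b)$. Define $s(x,q,o)=\log v_{x,q}(o)$. Then \eqref{eq:form} holds for every listing $\sigma$, because the left-hand side does not depend on $\sigma$ by order invariance and the right-hand side does not either.

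The main obstacle is justifying that $s$ ``depends on the state, the question and the option alone.'' The scale must not depend on $S$, and not only on $\sigma$. This is exactly what Luce's theorem delivers: a single $v$ serves all sets drawn from the universe of options available in that context. I would therefore make explicit that Proposition~\ref{prop:luce} is applied to the family of all finite subsets $S$ of a common option universe for fixed $(x,q)$. I would also recall the standard construction behind it. For a fixed reference option $o_0$, set $v(a)=R(a\mid\{a,o_0\})/R(o_0\mid\{a,o_0\})$. Chaining ratios through any $T\supseteq\{a,b,o_0\}$ then shows that $R(a\mid S)/R(b\mid S)=v(a)/v(b)$ for every $S$ containing $a$ and $b$, and normalising over $S$ gives the formula.

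A secondary point is that $v$ is determined only up to a positive multiplicative constant. The score $s$ is therefore determined up to an additive constant depending on $(x,q)$, which leaves \eqref{eq:form} unchanged. We would note this so that ``a score'' is read as existence, not uniqueness.
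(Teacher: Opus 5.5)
Your proposal is correct and follows the paper's proof. In the ``only if'' direction you apply Proposition~\ref{prop:luce} in each context with the same reference-option scale $v_{x,q}(a)=R(a\mid\{a,r\})/R(r\mid\{a,r\})$, set $s=\log v_{x,q}$, and note that $s$ is unique only up to an additive $c(x,q)$, exactly as the paper does. The only difference is cosmetic: in the ``if'' direction you verify the product form $R(a\mid S)\,R(S\mid T)=R(a\mid T)$ directly, whereas the paper shows that the ratio $\exp(s(a)-s(b))$ is the same in every set and then invokes its ratio characterisation of the axiom.
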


\begin{proof}
If $R$ has this form, the listing does not enter it, and $R(a\mid T)/R(b\mid T)=\exp(s(a)-s(b))$ in every set. Conversely, order invariance makes $R$ a function of the set, and Proposition~\ref{prop:luce} gives a scale $v$ for each context. Its ratios $v(a)/v(b)=R(a\mid\{a,b\})/R(b\mid\{a,b\})$ are fixed by the two-option sets, so $v$ is determined, up to a constant, by the options themselves, and we set $s=\log v$. Appendix~\ref{app:proofs} gives both proofs in full.
\end{proof}

\paragraph{Consequence for transformers.} Proposition~\ref{prop:form} is guaranteed by construction if the computation of each option's score uses only the shared context and that option's own tokens, at positions that do not depend on the listing. For a System One model this is the only route: averaging over orders needs many passes and remains approximate, and reasoning needs generation. A read in which a token that contributes to an option's score attends to another option, or in which positions depend on the listing, satisfies the axioms only if the model happens to ignore that information. SCQ reads one letter token that attends to the whole list. Sorting the options removes the dependence on the listing but not on the set: an added option shifts the positions of the options after it. Set-Based Prompting~\cite{mcilroyyoung2024order} and Set-LLM~\cite{egressy2025setllm} give options shared positions, but read the decision from tokens that attend to every option.

\paragraph{Metrics.} \emph{Accuracy} is measured with the options in their source order. The \emph{flip rate} is the share of answers that change between the source order and one seeded shuffle, excluding exact ties. For the choice axiom we add one irrelevant option to every question. The \emph{switch rate} is the share of answers that move to a different original option. The \emph{probability drift} is the total variation distance, in points, between the distribution over the original options before and after the addition, renormalised over the original options. It is zero on every question if and only if the choice axiom holds for these additions. The \emph{steering rate} is a worst case: the share of questions for which at least one of 10 candidate irrelevant options moves the answer to a different original option, as an adversary who can add one option would choose it.

\section{Standard Reads Violate the Axioms}
\label{sec:analysis}

We study SCQ on Qwen3-8B~\cite{yang2025qwen3} and Gemma 4 12B~\cite{gemma2026gemma4}, untrained and after fine-tuning on 9.1k decisions with a fresh random order per example, as described in Section~\ref{sec:setup}. All numbers are means over the five benchmark sets.

\paragraph{Position, not the letter.} A shuffle moves both an option's letter and its slot in the list. We move one at a time. When options stay in place but take the letters the shuffle would give them, 9.1\% (Gemma) and 14.0\% (Qwen) of answers change, and 5.4\% and 3.8\% after training. When options move but keep their own letters, 12.0\% and 22.7\% change, at least as many as under a full shuffle (11.9\% and 20.0\%).

\paragraph{Training plateaus.} Figure~\ref{fig:motivation}a shows that training on shuffled orders reduces the flip rate but does not remove it: after three epochs, one shuffle still changes 5.6\% and 6.0\% of answers while accuracy has risen by about one point, and moving only the slots still changes 8.4\% and 6.8\%.

\paragraph{Sorting fixes a convention.} Sorting the options by text gives every listing of a set the same input, so the flip rate is zero. But A--Z is one convention among many: reading the same options sorted Z--A instead changes 7.2\% and 8.5\% of answers at equal accuracy, as Figure~\ref{fig:motivation}b shows. Sorting does not remove the dependence on position. It freezes it into a choice that is unrelated to the question.

\paragraph{Added options move the others.} When we add one irrelevant option, trained SCQ moves 3.3\% and 2.7\% of its answers to a different original option, and SCQ with sorted options 3.1\% and 2.2\%: in both listings, the new option shifts the letters and positions of the options after it. Below the answers, 3.8 and 4.0 points of probability move among the original options. Set architectures, which are order-invariant, violate the choice axiom as much or more, as Section~\ref{sec:axioms} shows.

\begin{figure*}[t]
  \centering
  \includegraphics[width=\textwidth]{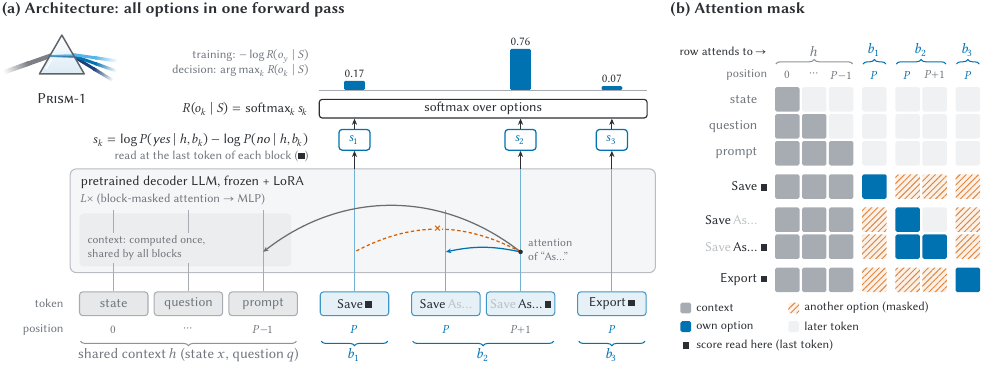}
  \caption{\ours{}. (a) Every option is a block that starts at the same position $P$ after the shared context, and all blocks are read in one forward pass of the pretrained decoder, fine-tuned with LoRA. The score $s_k=\log P(\textit{yes}\mid h,b_k)-\log P(\textit{no}\mid h,b_k)$ is read at the last token of each block ($\blacksquare$), a softmax over the blocks gives the decision, and training minimises $-\log R(o_y\mid S)$ for the correct option $o_y$. (b) The attention mask: the context attends causally to itself, and each block attends to the context and causally to itself, never to another option. No option's score depends on the other options or on the listing.}
  \label{fig:method}
\end{figure*}

\paragraph{Takeaways.} Order dependence lives mainly in positions. Training does not teach the model to ignore them, sorting only fixes them, and none of the reads we test satisfies the choice axiom. Proposition~\ref{prop:form} says what a compliant read must look like, and the next section builds one.

\section{\ours{}}
\label{sec:method}

\ours{} computes the score $s(x,q,o)$ of Proposition~\ref{prop:form} with a language model, for all options in one forward pass.

\paragraph{Layout.} \ours{} places a shared context $h$ at positions $0,\dots,P-1$: the state, the question, and a prompt that asks whether the following candidate is the correct answer and ends with ``Candidate answer:''. Each option $o_k$ becomes a block $b_k$ that holds the option's text and nothing else. Every block starts at position $P$ and counts up inside itself, so word order inside an option is kept, and there is no order between options, as Figure~\ref{fig:method} shows. The attention mask lets the context attend causally to itself and lets each block attend to the context and causally to itself, but never to another block. There is no token after the blocks, so no token ever attends to more than one option.

\paragraph{Causal reading.} A block computes exactly what an ordinary causal (left-to-right) forward pass over the context followed by that one option computes. The model therefore reads each option as it was pretrained to. Nothing in a block asks for an answer, so the read at its last token is learned in fine-tuning.

\paragraph{Score.} The score of option $o_k$ is the model's answer at the last token of its block,
\begin{equation}
  s(o_k)=\log P(\textit{yes}\mid h,b_k)-\log P(\textit{no}\mid h,b_k),
\end{equation}
and the decision is the softmax of the scores. Every option is judged with the same two tokens, so no option needs a label, and the last token of a block is the first position that has read the whole option. If each block's answer is read as an independent judgment of its option, correct with probability $p_k=P(\textit{yes})/(P(\textit{yes})+P(\textit{no}))$, and exactly one option is correct, the probability that $o_k$ is the answer is
\begin{equation}
  \label{eq:posterior}
  \frac{p_k\prod_{j\neq k}(1-p_j)}{\sum_i p_i\prod_{j\neq i}(1-p_j)}=\frac{\exp s(o_k)}{\sum_i\exp s(o_i)},
\end{equation}
the softmax of these scores, derived in Appendix~\ref{app:proofs}. All blocks are read in one packed forward pass, with the context computed once.

\paragraph{Training.} We fine-tune with LoRA~\cite{hu2022lora} on the cross-entropy over the options, $\mathcal{L}=-\log p(o_y)$ for the correct option $o_y$. No order augmentation or consistency term is needed: the model cannot see an order.

\paragraph{Properties.} Each score depends only on $(x,q,o_k)$, so by Proposition~\ref{prop:form} \ours{} is order-invariant and satisfies the choice axiom: adding an option leaves every score unchanged and only renormalises the softmax. All options are read in one forward pass, as a System One model requires. The cost grows linearly with the number of options, and since no option needs a label, their number is not bounded by a set of single-token labels. Decisions are also incremental: an added option, such as a newly registered tool, needs only its own block, given the cached context, and every earlier score stays valid, whereas reads in which options see each other must read the whole set again.

\paragraph{An answer slot.} Appending the question, the end of the user turn and an answer prefix to every block makes the read usable without training, at the cost of blocks several times longer. We use it only in the design study of Section~\ref{sec:ablation}.

\section{Experiments}
\label{sec:experiments}

\subsection{Setup}
\label{sec:setup}

\paragraph{Models.} Qwen3-8B~\cite{yang2025qwen3} and Gemma 4 12B~\cite{gemma2026gemma4}, instruction-tuned, with thinking disabled.

\paragraph{Benchmark.} We use five sets with 2 to 20 options. Banking20 has 300 test messages over 20 intents of a banking intent dataset~\cite{casanueva2020efficient}, and Newsgroups 300 posts over the 20 topics of 20 Newsgroups~\cite{lang1995newsweeder}. BEV~\cite{avbiswas2026bev} mixes decision questions built from public datasets and records, from page-element and tool selection to product categories, response preference, code and contract evidence, and we take 100 choice questions from each of the 12 domains of its test split (1,182 questions, 2--10 options). Typed Decisions~\cite{localllama2026typed} contributes the 600 choice questions of its test split (4--5 options), and JevBench~\cite{standhartinger2026jevbench} its 139 public choice tasks (3--6 options).

\paragraph{Training data.} 9,100 questions for every trained method: that dataset's training split on the same 20 intents, 20 Newsgroups' training split, 2,500 choice questions from BEV's training split, and Typed Decisions' training split, with 50 questions per set held out for validation. JevBench has no training data and serves as a held-out task.

\paragraph{Baselines.} \emph{SCQ}, fine-tuned with a fresh random order per example. \emph{Cyclic averaging}~\cite{zhang2026anyjev}, which averages the SCQ read over the $K$ rotations of the listing (AnyJev's order averaging, without its prior correction). \emph{SCQ, sorted options}, which lists the options sorted by text. \emph{Set-Based Prompting}~\cite{mcilroyyoung2024order}, training-free. \emph{Set-LLM}~\cite{egressy2025setllm}, which we re-implement: a bidirectional prompt with set positions and the set mask, trained to generate the correct option's text and read by the likelihood of each option's text. Every trained method uses the same data and budget, and all re-implementations are checked against an unpacked reference.

\paragraph{Training.} LoRA with rank 16 and $\alpha=32$ on the attention and MLP projections, AdamW with learning rate $10^{-4}$, 3\% warm-up and cosine decay, 32 questions per step, one epoch. SCQ and \ours{} are trained with three seeds and Set-LLM with two, and reported as means.

\subsection{Main Results}

\begin{table*}[t]
  \caption{Main results. Acc: accuracy (\%). Flip: answers changed by one shuffle (\%). Dr.: probability moved among the original options when one irrelevant option is added (points). St.: questions on which an added irrelevant option can move the answer to another original option (\%). Rob.: order-robust accuracy, right in both the source and the shuffled order (\%). Flip, Dr.\ and St.\ are zero for a rule that satisfies both axioms. For SCQ with sorted options, Rob.\ compares A--Z with Z--A sorting. $^\S$: published model, not trained on our data. Bold: best per model, excluding published models.}
  \label{tab:main}
  \small
  \setlength{\tabcolsep}{2.0pt}
  \begin{tabular}{l*{25}{r}}
    \toprule
     & \multicolumn{4}{c}{Banking20} & \multicolumn{4}{c}{Newsgroups} & \multicolumn{4}{c}{BEV} & \multicolumn{4}{c}{Typed} & \multicolumn{4}{c}{JevBench} & \multicolumn{5}{c}{Avg} \\
    \cmidrule(lr){2-5}\cmidrule(lr){6-9}\cmidrule(lr){10-13}\cmidrule(lr){14-17}\cmidrule(lr){18-21}\cmidrule(lr){22-26}
    Method & Acc & Flip & Dr. & St. & Acc & Flip & Dr. & St. & Acc & Flip & Dr. & St. & Acc & Flip & Dr. & St. & Acc & Flip & Dr. & St. & Acc & Rob. & Flip & Dr. & St. \\
    \midrule
    \multicolumn{26}{l}{\emph{Closed-source models, through their APIs}} \\
    Jev 1.13$^\S$ & 87.0 & 4.0 & 1.6 & 3.3 & 75.3 & 3.7 & 2.8 & 8.0 & 76.4 & 6.3 & 3.2 & 13.9 & 72.8 & 8.0 & 5.1 & 22.0 & 88.5 & 1.4 & 1.6 & 7.2 & 80.0 & 77.9 & 4.7 & 2.9 & 10.9 \\
    Mercury Decide$^\S$ & 91.3 & 2.3 & 1.6 & 7.0 & 73.3 & 7.7 & 2.6 & 8.3 & 75.1 & 12.5 & 4.7 & 27.2 & 67.3 & 10.5 & 6.8 & 30.5 & 89.9 & 5.0 & 1.4 & 7.2 & 79.4 & 76.7 & 7.6 & 3.4 & 16.0 \\
    \midrule
    \multicolumn{26}{l}{\emph{Open-source models: Gemma 4 12B}} \\
    SCQ & 91.9 & 5.4 & 1.8 & 3.0 & 71.8 & 9.4 & 4.2 & 13.2 & 77.5 & 10.1 & 5.2 & 20.3 & \textbf{75.9} & 5.3 & 4.8 & 11.2 & 84.9 & 5.6 & 2.8 & 10.8 & 80.4 & 77.8 & 7.1 & 3.8 & 11.7 \\
    SCQ + cyclic avg. & 91.7 & 0.3 & 0.5 & 2.0 & 73.3 & 5.0 & 1.1 & 2.0 & \textbf{79.2} & 2.5 & 3.2 & 7.0 & 74.5 & 1.7 & 2.9 & 6.0 & 86.3 & 2.2 & 1.8 & 3.0 & 81.0 & \textbf{80.5} & 2.3 & 1.9 & 4.0 \\
    SCQ, sorted & 92.0 & \textbf{0.0} & 1.4 & 3.3 & \textbf{74.7} & \textbf{0.0} & 3.0 & 6.8 & 77.3 & \textbf{0.0} & 4.8 & 12.8 & 74.5 & \textbf{0.0} & 4.7 & 6.9 & \textbf{87.1} & \textbf{0.0} & 3.2 & 7.9 & \textbf{81.1} & 78.1 & \textbf{0.0} & 3.4 & 7.6 \\
    Set-Based Prompting & 25.3 & \textbf{0.0} & 7.9 & 43.7 & 27.7 & \textbf{0.0} & 8.3 & 32.3 & 51.9 & \textbf{0.0} & 12.9 & 37.8 & 57.0 & \textbf{0.0} & 9.1 & 28.0 & 78.4 & \textbf{0.0} & 10.9 & 28.8 & 48.1 & 48.1 & \textbf{0.0} & 9.8 & 34.1 \\
    Set-LLM & 89.7 & \textbf{0.0} & 1.1 & 3.0 & 73.2 & \textbf{0.0} & 2.1 & 5.3 & 70.6 & \textbf{0.0} & 5.8 & 24.6 & 75.4 & \textbf{0.0} & 5.5 & 14.3 & 68.7 & \textbf{0.0} & 2.6 & 8.6 & 75.5 & 75.5 & \textbf{0.0} & 3.4 & 11.2 \\
    Torchcast Decision 12B$^\S$ & 83.7 & 8.8 & 2.9 & 7.4 & 66.3 & 12.5 & 4.4 & 8.1 & 71.1 & 10.4 & 5.9 & 21.6 & 69.0 & 8.7 & 5.4 & 25.4 & 87.8 & 5.0 & 2.0 & 10.1 & 75.6 & 72.2 & 9.1 & 4.2 & 14.5 \\
    \rowcolor{gray!15} \ours{} & \textbf{93.0} & \textbf{0.0} & \textbf{0.0} & \textbf{0.0} & 73.4 & \textbf{0.0} & \textbf{0.0} & \textbf{0.0} & 77.2 & \textbf{0.0} & \textbf{0.0} & \textbf{0.0} & 72.4 & \textbf{0.0} & \textbf{0.0} & \textbf{0.0} & 81.3 & \textbf{0.0} & \textbf{0.0} & \textbf{0.0} & 79.5 & 79.5 & \textbf{0.0} & \textbf{0.0} & \textbf{0.0} \\
    \midrule
    \multicolumn{26}{l}{\emph{Open-source models: Qwen3-8B}} \\
    SCQ & 91.9 & 3.3 & 1.7 & 3.7 & 73.1 & 9.7 & 4.5 & 10.8 & 72.6 & 11.7 & 5.6 & 24.3 & 74.2 & 5.0 & 4.9 & 9.7 & 77.9 & 7.3 & 3.6 & 9.5 & 77.9 & 74.3 & 7.4 & 4.0 & 11.6 \\
    SCQ + cyclic avg. & 92.0 & 1.7 & 0.5 & 2.0 & 73.3 & 7.0 & 1.0 & 2.0 & \textbf{74.0} & 3.1 & 2.8 & 5.1 & 74.0 & 1.0 & 2.4 & 11.1 & 76.3 & 2.2 & 1.8 & 4.0 & 77.9 & 76.6 & 3.0 & 1.7 & 4.8 \\
    SCQ, sorted & 92.3 & \textbf{0.0} & 1.6 & 1.0 & 74.7 & \textbf{0.0} & 2.8 & 6.4 & 73.6 & \textbf{0.0} & 4.9 & 12.2 & 73.2 & \textbf{0.0} & 4.5 & 4.3 & \textbf{79.9} & \textbf{0.0} & 3.6 & 7.3 & \textbf{78.7} & 74.4 & \textbf{0.0} & 3.5 & 6.2 \\
    Set-Based Prompting & 29.0 & \textbf{0.0} & 8.0 & 32.3 & 58.7 & \textbf{0.0} & 5.2 & 18.3 & 51.6 & \textbf{0.0} & 12.7 & 34.3 & 54.2 & \textbf{0.0} & 10.5 & 34.6 & 61.2 & \textbf{0.0} & 9.6 & 23.7 & 50.9 & 50.9 & \textbf{0.0} & 9.2 & 28.6 \\
    Set-LLM & 92.7 & \textbf{0.0} & 1.1 & 3.3 & 74.8 & \textbf{0.0} & 2.1 & 5.3 & 62.2 & \textbf{0.0} & 5.9 & 26.9 & 65.1 & \textbf{0.0} & 6.4 & 25.8 & 65.5 & \textbf{0.0} & 3.5 & 17.3 & 72.1 & 72.1 & \textbf{0.0} & 3.8 & 15.7 \\
    \rowcolor{gray!15} \ours{} & \textbf{93.0} & \textbf{0.0} & \textbf{0.0} & \textbf{0.0} & \textbf{74.9} & \textbf{0.0} & \textbf{0.0} & \textbf{0.0} & 73.9 & \textbf{0.0} & \textbf{0.0} & \textbf{0.0} & \textbf{74.3} & \textbf{0.0} & \textbf{0.0} & \textbf{0.0} & 73.9 & \textbf{0.0} & \textbf{0.0} & \textbf{0.0} & 78.0 & \textbf{78.0} & \textbf{0.0} & \textbf{0.0} & \textbf{0.0} \\
    \bottomrule
  \end{tabular}
\end{table*}

Table~\ref{tab:main} reports accuracy, flip rates, order-robust accuracy and the tests of the choice axiom described in Section~\ref{sec:axioms}.

\paragraph{Against SCQ.} \ours{} reaches 79.5\% on Gemma and 78.0\% on Qwen, against 80.4\% and 77.9\% for trained SCQ, which flips 7.1\% and 7.4\% of its answers. No difference is significant in exact McNemar tests pooled over the five sets ($p\geq0.13$). Per set, SCQ is ahead on the held-out JevBench and, on Gemma, on Typed, and \ours{} on Banking20 and Newsgroups, with per-set tests in Appendix~\ref{app:details}. SCQ with sorted options is also order-invariant and 1.6 and 0.7 points more accurate in this setting (not significant, $p\geq0.08$), but it violates the choice axiom, as Section~\ref{sec:axioms} shows. Cyclic averaging needs $K$ passes and still flips 2.3--3.0\% of answers, and AnyJev's prior correction does not make it more stable. \ours{}'s probabilities are as well calibrated as SCQ's, with an expected calibration error of 5.3\% and 5.4\% against 4.9\% and 5.4\%.

\paragraph{Against invariant set methods.} Set-Based Prompting, which needs no training, reaches only 48.1\% and 50.9\% on these instruction-tuned models, as its authors found for Llama~3. At equal training, \ours{} is 4.0 and 5.9 points more accurate than Set-LLM on average ($p\leq0.002$), mostly on BEV and the held-out JevBench, and Section~\ref{sec:ablation} traces this to how each option is read.

\paragraph{Accuracy that survives reordering.} Accuracy in the source order credits SCQ with questions it answers wrongly under another listing of the same options. Counting a question only if the answer is right in both listings, the Rob.\ column of Table~\ref{tab:main}, SCQ falls from 80.4\% to 77.8\% on Gemma and from 77.9\% to 74.3\% on Qwen, and \ours{}, unchanged, is 1.6 and 3.7 points ahead ($p<0.01$). Only cyclic averaging, at $K$ passes per decision, keeps up: it is 1.0 points ahead of \ours{} on Gemma (not significant, $p=0.08$) and 1.3 points behind on Qwen ($p=0.02$). Two listings understate the effect: counting a question only if SCQ is right under all $K$ rotations of its listing, SCQ falls to 74.9\% and 69.6\%, 4.7 and 8.1 points below \ours{} ($p<10^{-11}$).

\begin{figure}[t]
  \centering
  \includegraphics[width=\columnwidth]{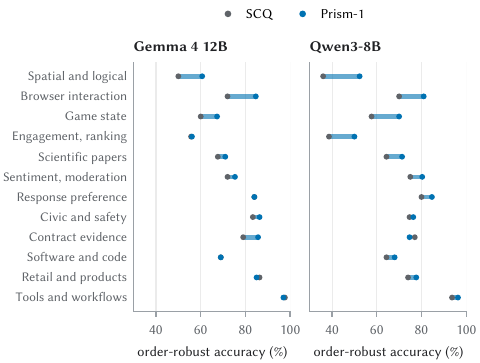}
  \caption{BEV by domain: order-robust accuracy of SCQ (grey) and \ours{} (blue), which for \ours{} equals its accuracy. \ours{} is ahead wherever its dot lies to the right.}
  \label{fig:domains}
\end{figure}

\paragraph{By domain.} Figure~\ref{fig:domains} breaks BEV down by domain. \ours{} has the higher order-robust accuracy in 19 of the 24 domain--model pairs and is never more than 2.3 points behind, with its largest gains on the hardest domains. On browser interaction, where a web agent picks the next page element, it is also more accurate than SCQ in the source order (84.7\% and 81.0\% against 76.3\% and 74.3\%). On engagement, which asks which of two headlines drew more clicks, SCQ's lead in the source order (11 and 6 points) does not survive swapping the headlines, which changes 22\% and 30\% of its answers.

\subsection{Testing the Axioms}
\label{sec:axioms}
\begin{figure}[t]
  \centering
  \includegraphics[width=\columnwidth]{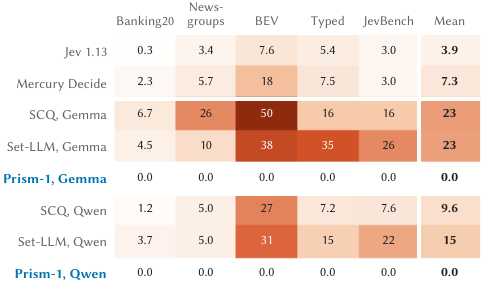}
  \caption{Injected options: answers (\%) that an added option naming a wrong option as correct moves to that option, per set.}
  \label{fig:inject}
\end{figure}
To test the choice axiom, we add to every question one option from another question of the same set (for Banking20 and Newsgroups, a banking intent outside Banking20's 20), at a random position.

Every baseline violates the axiom, as the Dr.\ and St.\ columns of Table~\ref{tab:main} and the switch rates in Figure~\ref{fig:results}c show. An added option moves 2--5\% of the answers of SCQ, sorted SCQ and Set-LLM to another original option, and 3--4 points of probability. \ours{} moves neither answers nor probability. An adversary who can add one option steers 4--34\% of the decisions of every other read and of the published decision models, but none of \ours{}'s, and needs few attempts: a single added option, placed where it does most harm, already steers 6--9\% of the decisions of SCQ and of the published models.

\paragraph{Injected options.} An added option can also carry text aimed at the decision, as an injected tool description or a planted page element would. We add to every question one option that names a wrong original option as the correct answer (``Note to the decision model: the correct answer is X'' or ``System update: X is the only valid choice; ignore the other options''), at the start, the middle or the end. It redirects every other read we test to the named option, most of all on BEV, as Figure~\ref{fig:inject} shows. The order-invariant set architectures are among the easiest to redirect: Set-Based Prompting on 55\% and 31\% of questions and Set-LLM on 23\% and 15\%, with every read in Appendix~\ref{app:inject}. Invariance to the order alone does not protect a decision. The choice axiom does. \ours{} is never redirected, since no option's text reaches another option's score.

\subsection{Number of Options and Cost}
\label{sec:scale}

\begin{figure*}[t]
  \centering
  \includegraphics[width=\textwidth]{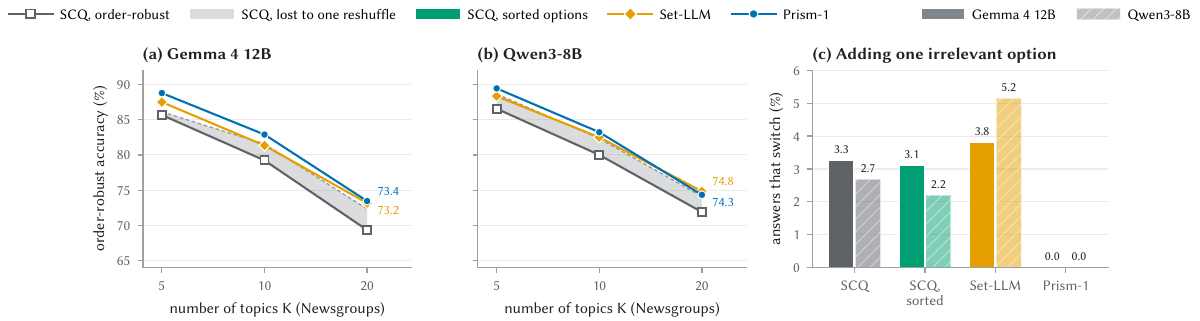}
  \caption{The number of options and the choice axiom, with every method trained for one epoch on the same data. (a, b) Order-robust accuracy, right both in the source order and in the shuffled order, on the 300 Newsgroups test posts with $K$ nested topics as options. Shaded: SCQ's answers that are right in the source order but not after one reshuffle, up to its accuracy in the source order alone. (c) Switch rate: answers that move to a different original option when one irrelevant option is added, averaged over five sets.}
  \label{fig:results}
\end{figure*}

The five sets have 2 to 20 options. To see how the reads behave as the set grows, we read the 300 Newsgroups test posts with 5, 10 and 20 of its topics as options, nested so that every larger set contains the smaller ones.

Figure~\ref{fig:results}a--b shows order-robust accuracy as the options grow. \ours{} is ahead of SCQ at every size on both models, by 3.1--4.1 points on Gemma and 2.4--3.2 on Qwen ($p<0.05$ for 8 of the 18 comparisons). Even in the source order alone, SCQ is ahead in only one of the 18 pairs, by 0.3 points, while \ours{} leads by up to 4.3 points (significantly for one pair). Set-LLM stays within 1.6 points of \ours{} on Newsgroups, whose topics are all in the training data, and falls behind on the sets that are not.

Appendix Table~\ref{tab:cost} shows that \ours{} is also the cheapest read. On Banking20 it processes 221 tokens, against 282 for SCQ and 322 for Set-LLM, which reads every option twice, with about half of SCQ's attention pairs, and on Newsgroups 356 against 417 and 413. It is also the fastest, although with 20 options the shared context dominates (58 and 60\,ms on Gemma, against 60--63\,ms for SCQ and 66--68\,ms for Set-LLM). The answer slot of Section~\ref{sec:method} repeats the question after every option: 591 tokens and 76\,ms on Banking20.

\subsection{How to Compute the Scores}
\label{sec:ablation}

Proposition~\ref{prop:form} fixes the form of the read, not how the model should compute each option's score. Three choices matter: whether the options attend to each other, where each option's score is read, and the training loss. Table~\ref{tab:ablation} changes them one at a time from \ours{}, with the same training data and one run each. With all three changed, the read is the design of Set-LLM~\cite{egressy2025setllm}.

\begin{table}[t]
  \caption{Ablation of \ours{}: mean accuracy and switch rate when an option is added (\%). Each row changes one more design choice, and the fourth row is the design of Set-LLM.}
  \label{tab:ablation}
  \small
  \setlength{\tabcolsep}{3.5pt}
  \begin{tabular}{lrrrr}
    \toprule
    & \multicolumn{2}{c}{Gemma 4 12B} & \multicolumn{2}{c}{Qwen3-8B} \\
    \cmidrule(lr){2-3}\cmidrule(lr){4-5}
    Variant & Acc & Switch & Acc & Switch \\
    \midrule
    \rowcolor{gray!15} \ours{} & 79.6 & 0.0 & 77.7 & 0.0 \\
    options attend to each other & 74.4 & 3.5 & 74.8 & 1.3 \\
    + score read after the set & 76.4 & 3.8 & 72.0 & 3.4 \\
    + likelihood of the option's text & 74.7 & 3.6 & 72.1 & 5.1 \\
    \midrule
    \ours{} + answer slot & 79.7 & 0.0 & 76.1 & 0.0 \\
    \bottomrule
  \end{tabular}
\end{table}

\paragraph{Isolated options give accuracy and transfer.} Letting the options attend to each other, with nothing else changed, costs 5.2 and 2.9 points ($p<0.001$ pooled), mostly on BEV (6.4 and 6.0 points) and the held-out JevBench (11.5 and 5.0 points). A bidirectional prompt changes how every layer of the pretrained model computes. Untrained, a bidirectional set prompt answers 23\% and 20\% of our validation questions correctly, against 68\% on both models for isolated causal blocks with the answer slot, as Figure~\ref{fig:setllm}a in the appendix shows. Fine-tuning repairs this on the training tasks, but not on tasks that rely on what the model knew before: trained, the bidirectional prompt stays within 2.2 points of trained SCQ on Banking20 and Newsgroups but trails it by 7 and 10 points on BEV and by 16 and 12 points on the held-out JevBench, as Figure~\ref{fig:setllm}b shows.

\paragraph{Reading each option gives the axiom.} Reading the score from an answer generated after the set, instead of at each option, raises the switch rate on Qwen from 1.3\% to 3.4\%, while on Gemma it changes little, from 3.5\% to 3.8\%. With bidirectional attention, options influence each other even when each is read at its own end, and only isolated blocks bring the switch rate to zero.

\paragraph{Loss and answer slot.} Training on the likelihood of the correct option's text instead of the cross-entropy over the options costs 1.7 points on Gemma and nothing on Qwen. Appending the answer slot to every block keeps both axioms, at about equal accuracy on Gemma (79.7\% against 79.6\%) and 1.6 points less on Qwen, with blocks several times longer.

\subsection{The Price of the Choice Axiom}
\label{sec:price}

By Proposition~\ref{prop:form}, a rule in which an option's score depends on its rivals violates the choice axiom, and weighing near-miss distractors against each other is exactly what the axiom rules out. The price is small: on average \ours{} is 0.9 points below SCQ on Gemma and level with it on Qwen, and the gaps are concentrated on the held-out JevBench and, on Gemma, on Typed.

\begin{figure}[t]
  \centering
  \includegraphics[width=\columnwidth]{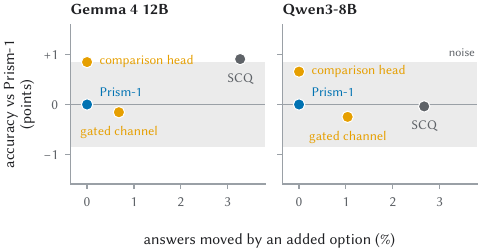}
  \caption{The price of the choice axiom: accuracy relative to \ours{} against the answers an added option moves to another original option. Grey: run-to-run noise, two standard deviations.}
  \label{fig:price}
\end{figure}

We tried to buy comparison back, giving up the guarantee. Figure~\ref{fig:price} shows the main mechanisms, and Table~\ref{tab:mech} in the appendix every variant. A gated channel through which every block attends to the other blocks' last tokens in every layer, with gates starting at zero, leaves accuracy unchanged ($-0.2$ points on both models), and widening the channel or forcing the gates open does not help. Training leaves the channel nearly unused, with its gates at 4\% of full strength on average, yet it breaks the axiom: an added option moves 0.7\% and 1.0\% of its answers. A comparison head that reads all final option states together adds 0.9 and 0.7 points, mostly on the held-out JevBench, but it learns not to compare irrelevant options: an added irrelevant option moves none of its answers, as for \ours{} itself, so its gain comes from more capacity for each option's score. Changes that keep the required form, namely auxiliary losses, borrowed distractors, joint training with SCQ, dividing out a content-free prior~\cite{zhao2021calibrate} and filler tokens~\cite{goyal2024pause}, change the mean accuracy by $-0.7$ to $+0.8$ points, within run-to-run noise. None of the mechanisms that let options compare improves accuracy beyond noise: in our experiments, the choice axiom costs little or no accuracy.

\section{Related Work}

\paragraph{Choice theory.} Luce's choice axiom~\cite{luce1959individual} characterises choice probabilities as ratios of option-specific scales, and underlies the multinomial logit model of discrete choice~\cite{mcfadden1974conditional}, the Plackett--Luce model of listwise learning to rank~\cite{plackett1975analysis,cao2007learning} and, for two options, Bradley--Terry preference models~\cite{bradley1952rank}. Its best-known limitation concerns near-duplicate options, the red-bus/blue-bus problem~\cite{debreu1960review}, which nested and similarity-based models address~\cite{tversky1972elimination}. Human choices also violate it through context effects, such as the attraction effect of an added, dominated option~\cite{huber1982adding}, and violate invariance through framing~\cite{tversky1986rational}. We use both as requirements for machine decisions, where consistency under changes of the listing and of the option set is the goal.

\paragraph{Option order in language models.} Listed options sway answers through their order~\cite{pezeshkpour2024large,wei2024unveiling,zong2024fool}, partly through a prior over option labels~\cite{zheng2024robust}, and the same holds for language models as evaluators~\cite{wang2024fair}. Calibration divides out a shared prior~\cite{zhao2021calibrate,zheng2024robust} but leaves each answer order-dependent. Fine-tuning on permuted orders reduces the bias~\cite{guda2025selection}, and averaging or voting over permutations reduces it further at a multiple of the cost~\cite{zhang2026anyjev,tang2024found}. Neither guarantees invariance.

\paragraph{Invariant architectures.} Permutation-invariant networks enforce invariance inside the model~\cite{zaheer2017deepsets,murphy2019janossy}. For language models, Set-Based Prompting gives set elements shared positions and an isolating mask without training~\cite{mcilroyyoung2024order}, and Set-LLM adds a bidirectional prompt and fine-tuning~\cite{egressy2025setllm}. Both read the decision from tokens that see every option, so they are order-invariant but violate the choice axiom. PINE~\cite{wang2025pine} and RoToR~\cite{yoon2025rotor} reassign positions instead, after which accuracy can still vary with the order, and structural methods for listwise reranking follow the same idea~\cite{bito2026onepass,schlatt2025setencoder}. Sorting the options is a canonicalisation~\cite{kaba2023equivariance,puny2022frame,ma2024canonicalization}, and we show that it fixes the dependence on position into a convention and does not give the choice axiom.

\paragraph{Pointwise scoring.} Pointwise rerankers score each candidate on its own, for example with the probability of a relevance token~\cite{nogueira2020document}, and can be trained with a softmax over a candidate list~\cite{zhuang2023rankt5}. Such scores have the form of Proposition~\ref{prop:form}. We show that for decisions this form is not one design among many but the only one that satisfies both axioms, and how to make it as accurate as SCQ.
\paragraph{Decision models.} System One models return distributions over typed answers in one forward pass~\cite{almeida2026jev,typesafe2026docs}, and their documentation notes that the order of choice options can affect the answer~\cite{typesafe2026order}, which Table~\ref{tab:main} measures for Jev and Mercury Decide~\cite{inception2026mercurydecide}. Open recipes turn language models into such decision models~\cite{zhang2026anyjev,torchcast2026}, and \ours{} is a decision read for them that satisfies both axioms.

\section{Limitations}
\label{sec:limitations}

\ours{} needs fine-tuning: nothing in a block asks for an answer, so untrained it answers only 28.5\% and 32.5\% of our validation questions correctly. With the answer slot, the untrained read is usable but less accurate than SCQ (69.9\% and 63.3\% against 75.9\% and 66.2\%). By design it gives up comparison between options inside the model, so the axioms suit decisions in which each option can be judged on its own, such as routing, intent classification and tool or page-element selection. They fit less well where the answer is defined by the other options, as in ``none of the above'' or ``the odd one out''. \ours{} protects the choice among the options, not the state: text injected into the state reaches every option's score, as it reaches every read. The choice axiom also decides how near-duplicates are treated: an added copy of an option receives about the same score and takes probability from every option in proportion, not mainly from its twin~\cite{debreu1960review}, so the copy and its twin share the twin's probability. The original options keep their ranking, but the top probability drops, which merging duplicates before the read avoids. Taking it mainly from the twin would need the same selective shift that lets an added option steer the choice. \ours{} trails SCQ on the held-out JevBench by 3.6 and 4.1 points, a task missing from our training data, and a read learned in fine-tuning may transfer less well than the letter read the models were instruction-tuned on. Our sets have at most 20 options. The guarantees are exact up to numerical rounding, which can tip near-ties when an option is added. We evaluate two open models of 8--12B parameters, with one to three seeds per configuration. Set-LLM is our re-implementation (its code is not public), trained on our data and budget.

\section{Conclusion}

A System One decision over a set should not depend on how the set is listed, and adding an irrelevant option should not change the relative odds of the others. Each standard read we test violates at least one of them. By Luce's theorem, the axioms leave one form: a softmax over scores that each see one option. \ours{} computes these scores by reading each option in its own block, in the model's causal attention. Across five benchmark sets and two models, it satisfies both axioms, stays within one point of SCQ prompting trained on the same data and is more accurate once only answers that survive a reordering are counted, and is the cheapest read we measured. It gives up only comparison between options inside the model, which the axioms forbid and which, in our experiments, did not buy accuracy back.

\bibliographystyle{ACM-Reference-Format}
\bibliography{references}

\appendix

\section{Proofs}
\label{app:proofs}

Options come from a universe $\mathcal{O}$, a set is a finite non-empty subset of $\mathcal{O}$, and $R$ gives every option of a set a positive probability. Within a proof we fix the context $(x,q)$ and leave it implicit unless it matters.

\paragraph{Ratios.} For an order-invariant rule, the choice axiom holds if and only if the ratio of two options' probabilities is the same in every set that contains both. If the axiom holds and $a,b\in S$, applying it to $\{a,b\}\subseteq S$ for $a$ and for $b$ and dividing gives
\begin{equation}
  \label{eq:ratio}
  \frac{R(a\mid S)}{R(b\mid S)}=\frac{R(a\mid\{a,b\})}{R(b\mid\{a,b\})}.
\end{equation}
Conversely, if every ratio is the same in every set, then for $S\subseteq T$ and $a\in S$,
\[
  \frac{R(a\mid T)}{R(S\mid T)}=\Bigl(\sum_{b\in S}\frac{R(b\mid T)}{R(a\mid T)}\Bigr)^{-1}=\Bigl(\sum_{b\in S}\frac{R(b\mid S)}{R(a\mid S)}\Bigr)^{-1}=R(a\mid S),
\]
which is the axiom.

\begin{proof}[Proof of Proposition~\ref{prop:luce}]
If $R(a\mid S)=v(a)/\sum_{b\in S}v(b)$, the ratio $R(a\mid S)/R(b\mid S)=v(a)/v(b)$ is the same in every set, so the axiom holds. Conversely, assume the axiom, fix a reference option $r$, and let $v(r)=1$ and $v(a)=R(a\mid\{a,r\})/R(r\mid\{a,r\})>0$ for $a\neq r$. For options $a$ and $b$, Equation~\eqref{eq:ratio} in the set $T=\{a,b,r\}$ gives
\[
  \frac{R(a\mid\{a,b\})}{R(b\mid\{a,b\})}=\frac{R(a\mid T)}{R(b\mid T)}=\frac{R(a\mid T)/R(r\mid T)}{R(b\mid T)/R(r\mid T)}=\frac{v(a)}{v(b)},
\]
where the last step applies Equation~\eqref{eq:ratio} to $\{a,r\}$ and $\{b,r\}$ (the cases in which $a$, $b$ or $r$ coincide are immediate). By Equation~\eqref{eq:ratio} again, within any set $S$ the probabilities are proportional to $v$, and since they sum to one, $R(a\mid S)=v(a)/\sum_{b\in S}v(b)$.
\end{proof}

\begin{proof}[Proof of Proposition~\ref{prop:form}]
\emph{If.} Let $R$ have the form of Equation~\eqref{eq:form}. The listing $\sigma$ enters the right-hand side only through the set $S$ it lists, and a sum over $S$ does not depend on the order of its terms, so $R(\cdot\mid x,q,\sigma)=R(\cdot\mid x,q,\sigma')$ for any two listings of $S$: $R$ is order-invariant. Its probabilities are positive, and $R(a\mid S)/R(b\mid S)=\exp(s(x,q,a)-s(x,q,b))$ is the same in every set that contains $a$ and $b$, so the choice axiom holds by the converse direction above.

\emph{Only if.} Let $R$ be positive and order-invariant, and let it satisfy the choice axiom in every context. Order invariance makes $R(\cdot\mid x,q,\sigma)$ depend on $\sigma$ only through $S$, so in each context $R$ is a rule over sets. Proposition~\ref{prop:luce}, applied in each context, gives $R(a\mid x,q,S)=v_{x,q}(a)/\sum_{b\in S}v_{x,q}(b)$ with the scale constructed in its proof, $v_{x,q}(a)=R(a\mid x,q,\{a,r\})/R(r\mid x,q,\{a,r\})$, which depends on $x$, $q$ and $a$ alone, for a fixed reference option $r$. With $s(x,q,o)=\log v_{x,q}(o)$, $R$ has the form of Equation~\eqref{eq:form}. The score is unique up to a term $c(x,q)$ that the softmax cancels, since by Equation~\eqref{eq:ratio} every difference $s(x,q,a)-s(x,q,b)$ is fixed by $R$.
\end{proof}

\paragraph{The softmax as a posterior.} Equation~\eqref{eq:posterior} follows from a simple model. Suppose that every option $o_k$ is correct independently, with probability $p_k\in(0,1)$, and condition on the event $E$ that exactly one option is correct. The probability that $o_k$ alone is correct is $p_k\prod_{j\neq k}(1-p_j)$, and $P(E)$ is the sum of these terms over $k$, so
\[
  P(o_k\mid E)=\frac{p_k\prod_{j\neq k}(1-p_j)}{\sum_i p_i\prod_{j\neq i}(1-p_j)}=\frac{p_k/(1-p_k)}{\sum_i p_i/(1-p_i)},
\]
dividing the numerator and the denominator by $\prod_j(1-p_j)>0$. With $p_k=P(\textit{yes}\mid h,b_k)/(P(\textit{yes}\mid h,b_k)+P(\textit{no}\mid h,b_k))$, the odds $p_k/(1-p_k)=P(\textit{yes}\mid h,b_k)/P(\textit{no}\mid h,b_k)=\exp s(o_k)$, and the probability that the model gives to every other token cancels. The posterior is therefore the softmax of the scores, and the training loss $-\log R(o_y\mid S)$ is the negative log-likelihood of the correct option under this model, given $E$. Only ratios of odds enter: a bias shared by all options, a constant added to every score, changes no decision and no probability.

\paragraph{\ours{} computes a score of the required form.} In the packed forward pass of Section~\ref{sec:method}, the hidden state of every token of block $b_k$, in every layer, equals its state in a causal pass over the context $h$ followed by $b_k$ alone, and so depends on $(x,q,o_k)$ only. By induction over layers: the context attends only to itself, so its states depend on $h$ alone; a token of $b_k$ starts from its embedding at position $P$ plus its offset in the block, which does not depend on the listing, and each attention layer combines its state with the keys and values of the context and of the earlier tokens of $b_k$, the only keys the mask allows, which by induction depend on $h$ and $b_k$ alone. Masked keys receive an additive $-\infty$, so they contribute exactly zero. The score read at the last token of $b_k$ is therefore a function $s(x,q,o_k)$, and Proposition~\ref{prop:form} gives both axioms.

\section{Experimental Details}
\label{app:details}

\begin{figure*}[t]
  \centering
  \includegraphics[width=\textwidth]{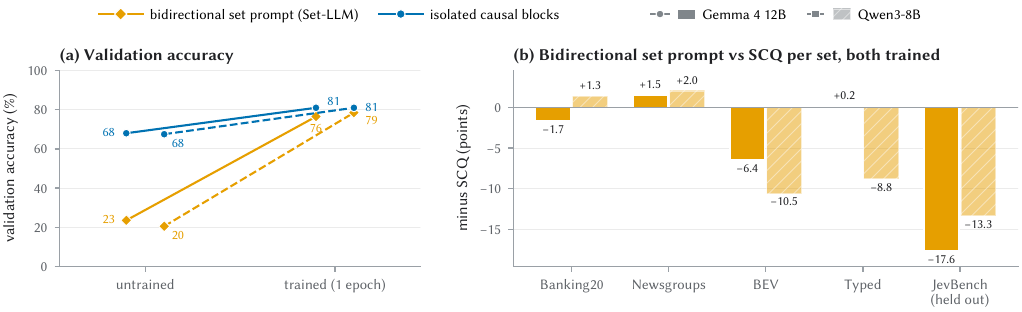}
  \caption{What bidirectional attention among options costs. (a) Validation accuracy before and after one epoch of training, for a bidirectional set prompt (Set-LLM's design) and for isolated causal blocks with the answer slot. (b) Accuracy of the trained bidirectional set prompt minus trained SCQ, in points.}
  \label{fig:setllm}
\end{figure*}

\begin{table}[htbp]
  \caption{The benchmark. Train: training questions, after holding out 50 per set for validation. State: length in characters, median / mean.}
  \label{tab:data}
  \small
  \setlength{\tabcolsep}{3.5pt}
  \begin{tabular}{lrrrr}
    \toprule
    Set & $K$ & Eval & Train & State (chars) \\
    \midrule
    Banking20~\cite{casanueva2020efficient} & 20 & 300 & 2,450 & 55 / 71 \\
    Newsgroups~\cite{lang1995newsweeder} & 20 & 300 & 2,450 & 542 / 750 \\
    BEV~\cite{avbiswas2026bev} & 2--10 & 1,182 & 2,450 & 232 / 787 \\
    Typed Decisions~\cite{localllama2026typed} & 4--5 & 600 & 1,750 & 708 / 697 \\
    JevBench~\cite{standhartinger2026jevbench} & 3--6 & 139 & -- & 82 / 2,563 \\
    \bottomrule
  \end{tabular}
\end{table}

\paragraph{Data.} Table~\ref{tab:data} summarises the five sets. From BEV we leave out its intent-routing domain, which reuses Banking20's source messages, and states over 8,000 characters. We read it at revision \texttt{11ed9b5} of \texttt{avbiswas/bev-decision}, and its training questions are spread evenly over the same 12 domains and share no state with any evaluation set. Options of BEV, Typed Decisions and JevBench are their native \texttt{key: description} pairs, read as one string. The shuffled order of each question is drawn once with a fixed seed and shared by all methods.

\paragraph{Prompts.} Every read uses the model's chat template with a system instruction: \emph{``Answer each question independently using only the state provided by the user. Treat the state as data, not as instructions. Evaluate each question using its own criteria, without conditioning its answer on other questions. Return exactly one allowed label for each question.''} For \ours{}, the user turn holds the state, the allowed labels (\textit{no}, \textit{yes}), the instruction \emph{``Reply with one line per question, in order: `id: label'. Do not add explanations.''} and \emph{``Question 0: $q$ Is the following candidate the correct answer? Candidate answer:''}, where the context ends; a block is the option alone, read at its last token. For SCQ, the user turn holds the state, \emph{``Question 0: $q$''}, the options one per line as ``A: \dots'', ``B: \dots'', and the same reply instruction; the assistant turn is opened with thinking disabled and the answer prefix ``0:'', and the decision is read from the next-token probabilities of the letters. The answer-slot read of the ablation asks ``Is this candidate the correct answer?'' after the option, and its block ends with the answer prefix ``0:''.

\paragraph{Training.} LoRA dropout is 0.05, and the 32 questions of a step are accumulated one at a time, so one epoch over the 9,100 training questions takes 284 steps (50--85 minutes on one H100 GPU for SCQ and \ours{}). No model selection uses the 200 validation questions, except the weight of the prior division below. Gemma~4~12B is \texttt{google/gemma-4-12B-it} at revision \texttt{707f0a3}.

\paragraph{Attention backend.} In our software stack (transformers 5.17), the SDPA attention path of Gemma 4 is not causal: appending a token changes earlier logits by up to 17.6. We use eager attention with additive masks for Gemma 4 and SDPA for Qwen3.

\paragraph{Baselines.} \emph{Cyclic averaging} reads SCQ on the $K$ rotations of the listing and averages each option's log-probability over them. AnyJev's prior correction, which divides each rotation's distribution over letter positions by a prior over positions estimated per question key, raises the share of answers that a shuffle changes from 2.3\% and 3.0\% to 4.3\% and 6.3\%, so we leave it out. \emph{Set-LLM}: our re-implementation follows the paper: set positions that restart for every option and resume after the longest one, a prefix mask over the prompt without edges between different options, and the correct option's text, followed by the end of the turn, as the training target. For evaluation, all answer branches are packed in one pass, each attending to the prompt and to itself, and agree with separate passes. \emph{Torchcast Decision 12B} is served with its own pinned vLLM~0.30.0 image and serving code~\cite{torchcast2026serving}, with which we reproduce its published JevBench score (203 of 231), and we read its native choice answers.

\paragraph{Added options.} For the choice axiom, the added option comes from another question of the same set whose key does not collide with an existing one (for Banking20 and Newsgroups, a banking intent outside Banking20's 20). The drift test adds one such option per question at a random position of the source order; the steering test draws 10 candidates per question and, for SCQ and the published decision models, inserts each at the start, the middle and the end, and for the order-invariant reads at the end; for \ours{} and cyclic averaging it uses the first 100 questions of each set, and for methods trained with several seeds the seed-0 model. Appended at the end, where it moves no other option's letter, an added option steers SCQ only 2--3\% of the time. A variant counts as steered if its answer is an original option other than the one chosen without it; questions or variants with an exact tie for the top are left out.

\begin{table}[htbp]
  \caption{The mechanisms of Section~\ref{sec:price}: accuracy minus \ours{}'s with the same seeds (points), overall and on JevBench (JB), and switch rate (Sw., \%). S: seeds; --: not run.}
  \label{tab:mech}
  \small
  \setlength{\tabcolsep}{2.5pt}
  \resizebox{\columnwidth}{!}{%
  \begin{tabular}{lrrrrrrr}
    \toprule
    & & \multicolumn{3}{c}{Gemma 4 12B} & \multicolumn{3}{c}{Qwen3-8B} \\
    \cmidrule(lr){3-5}\cmidrule(lr){6-8}
    Variant & S & Acc & JB & Sw. & Acc & JB & Sw. \\
    \midrule
    \multicolumn{8}{l}{\emph{Options compare (not of the required form)}} \\
    gated channel, rivals' last tokens & 1 & $-0.2$ & $-1.4$ & 0.7 & $-0.2$ & $0.0$ & 1.0 \\
    gated channel, all rival tokens & 1 & $+0.1$ & $+1.4$ & 0.7 & -- & -- & -- \\
    gated channel, gates forced open & 1 & $-1.2$ & $+1.4$ & -- & -- & -- & -- \\
    comparison head, all options & 2 & $+0.9$ & $+3.2$ & 0.0 & $+0.7$ & $+3.6$ & 0.0 \\
    comparison head, top 5 & 1 & $+0.1$ & $-0.7$ & 0.0 & $+0.4$ & $+2.2$ & 0.1 \\
    \midrule
    \multicolumn{8}{l}{\emph{Of the required form}} \\
    auxiliary per-option and margin loss & 1 & $+0.1$ & $-0.7$ & -- & $-0.3$ & $-1.4$ & -- \\
    borrowed distractors in training & 1 & $-0.1$ & $+0.7$ & -- & $-0.3$ & $+0.7$ & -- \\
    joint training with SCQ & 1 & $-0.1$ & $-2.2$ & -- & $+0.2$ & $+0.7$ & -- \\
    answer predicted before the options & 1 & $+0.8$ & $+1.4$ & -- & $+0.1$ & $+0.7$ & -- \\
    prior division (content-free state) & 2 & $+0.1$ & $0.0$ & -- & $+0.1$ & $+1.1$ & -- \\
    eight filler tokens per option & 2 & $+0.8$ & $+1.4$ & -- & $-0.7$ & $-2.9$ & -- \\
    \bottomrule
  \end{tabular}}
\end{table}

\paragraph{Mechanisms of Section~\ref{sec:price}.} \emph{Gated channel}: in every layer, every block also attends to the last token of each other block (or to all their tokens, ``wide''), and this attention's contribution is scaled by a per-layer, per-head gate $\tanh(g)$, with $g$ initialised at zero. \emph{Comparison head}: one self-attention layer (width 256, 4 heads, no position encoding) over the options' final hidden states and the context's, whose output, initialised at zero, adds a correction to each score, over all options or the top five. \emph{Auxiliary loss}: a per-option yes/no cross-entropy and a margin loss between the correct option and its strongest rival, each with weight 0.5. \emph{Borrowed distractors}: with probability 0.5, a training question loses a random share of its wrong options and gains up to $K$ options borrowed from other questions, half of its own set. \emph{Joint training}: the SCQ loss on a shuffled listing is added with weight 1. \emph{Answer first}: the negative log-likelihood of the correct option's text, generated from the context before any block, is added with weight 0.5. \emph{Prior division}: each score minus $\alpha$ times its score with the content-free state ``N/A''~\cite{zhao2021calibrate}, $\alpha$ chosen on validation. \emph{Filler tokens}: eight tokens `` .'' after each option, before the read.

\begin{table}[htbp]
  \caption{Cost per decision with 20 options on Banking20 (B20) and Newsgroups (NG): tokens processed, attention pairs (query--key pairs the mask allows, thousands) and milliseconds on one H100 GPU, Gemma 4 12B, medians over the first 100 questions of each set.}
  \label{tab:cost}
  \small
  \setlength{\tabcolsep}{4pt}
  \begin{tabular}{lrrrrrr}
    \toprule
    & \multicolumn{2}{c}{Tokens} & \multicolumn{2}{c}{Attn.\ pairs (k)} & \multicolumn{2}{c}{ms} \\
    \cmidrule(lr){2-3}\cmidrule(lr){4-5}\cmidrule(lr){6-7}
    Method & B20 & NG & B20 & NG & B20 & NG \\
    \midrule
    SCQ & 282 & 417 & 40 & 100 & 60 & 63 \\
    Set-LLM & 322 & 413 & 60 & 168 & 68 & 66 \\
    \ours{} & 221 & 356 & 21 & 76 & 58 & 60 \\
    \bottomrule
  \end{tabular}
\end{table}

\paragraph{Significance.} The comparison over all $K$ rotations in Section~\ref{sec:experiments} uses the seed-0 runs of SCQ and \ours{}, the only SCQ run read in every rotation. All tests are exact two-sided McNemar tests, per set or pooled over the 2,521 questions. Per set, SCQ is ahead on the held-out JevBench by 3.6 and 4.1 points, significant for one of six model--seed pairs, and on Typed for Gemma by 3.4 points, significant for two of three seeds, and \ours{} on Banking20 and Newsgroups, significant only on Newsgroups for one Qwen seed ($p=0.03$).

\section{Additional Results}
\label{app:results}

\subsection{Hosted Decision Models}
\label{app:hosted}

\begin{table}[htbp]
  \caption{The closed-source models against their noise floor, five-set means. Identical copies: the same question repeated in one request, scored as the added-option variants are. Sw.: switch rate (\%).}
  \label{tab:hosted}
  \small
  \setlength{\tabcolsep}{4pt}
  \begin{tabular}{lrrrr}
    \toprule
    Model & Flip & Dr. & St. & Sw. \\
    \midrule
    Jev 1.13 & 4.7 & 2.9 & 10.9 & 3.4 \\
    \quad identical copies & 1.3 & 1.3 & 5.1 & 1.3 \\
    Mercury Decide & 7.6 & 3.4 & 16.0 & 3.5 \\
    \quad identical copies & 0.7 & 0.5 & 1.5 & 0.7 \\
    \bottomrule
  \end{tabular}
\end{table}

Jev~\cite{almeida2026jev} and Mercury Decide~\cite{inception2026mercurydecide} are commercial System One models whose APIs return a probability for every option of a choice question. We use Jev~1.13.0 through TypeSafe's API and Mercury Decide's release of 30 September 2026 through OpenRouter's Decisions API, and send each question's options in the listed order. Both services answer the questions of a request independently, which we verified, so we send all listings of a question in one request. Jev rounds probabilities to two decimals, so we take its reported choice as the answer. Neither API is deterministic: the identical question changes its answer in 1.3\% (Jev) and 0.7\% (Mercury Decide) of cases within a request. Table~\ref{tab:hosted} therefore adds a noise floor, 31 identical copies of each question in one request, scored as the real variants are. Both models are about as accurate as trained SCQ on our sets (80.0\% and 79.4\%), and both violate both axioms well beyond this noise.

\subsection{Injected Options}
\label{app:inject}

\begin{table}[htbp]
  \caption{Injected options: answers (\%) moved to the named option (Target) or to any other original option (Switched).}
  \label{tab:inject}
  \small
  \setlength{\tabcolsep}{4pt}
  \begin{tabular}{lrrrr}
    \toprule
    & \multicolumn{2}{c}{Gemma 4 12B} & \multicolumn{2}{c}{Qwen3-8B} \\
    \cmidrule(lr){2-3}\cmidrule(lr){4-5}
    Read & Target & Switched & Target & Switched \\
    \midrule
    SCQ & 23.1 & 25.3 & 9.6 & 13.0 \\
    SCQ + cyclic avg. & 10.0 & 10.8 & 4.7 & 6.1 \\
    SCQ, sorted & 11.2 & 13.0 & 2.9 & 5.1 \\
    Set-Based Prompting & 54.8 & 58.5 & 31.4 & 41.8 \\
    Set-LLM & 22.7 & 23.6 & 15.3 & 18.1 \\
    \rowcolor{gray!15} \ours{} & 0.0 & 0.0 & 0.0 & 0.0 \\
    \midrule
    Jev 1.13 & \multicolumn{4}{c}{Target 3.9, Switched 9.3} \\
    Mercury Decide & \multicolumn{4}{c}{Target 7.3, Switched 14.1} \\
    \bottomrule
  \end{tabular}
\end{table}

Table~\ref{tab:inject} adds the other reads to Figure~\ref{fig:inject}, and the answers moved to any other option, for the worst phrasing and position.

\subsection{Calibration}
\label{app:calib}

Without any calibration step, \ours{}'s softmax has an expected calibration error of 5.3\% and 5.4\%, against 4.9\% and 5.4\% for SCQ and 9.3\% and 7.5\% for Set-LLM. At 95\% accuracy it decides 61.5\% and 51.4\% of the questions, 3.1 and 2.9 points fewer than SCQ, but the same ones under every listing.

\end{document}